\newtheorem{theorem}{Theorem}
\newtheorem{lemma}{Lemma}
\newtheorem{remark}{Remark}
\newcommand{\expect}[1]{\mathbb{E}\left[ #1 \right]}
\newcommand{\identityf}[1]{\mathbf 1_{\{#1\}}}
\newlength{\dhatheight}
\begin{document}

\title{Decentralized Heterogeneous Multi-Player Multi-Armed Bandits with Non-Zero Rewards on Collisions}

\author{Akshayaa Magesh,~\IEEEmembership{Graduate Student Member,~IEEE,}
        and Venugopal V. Veeravalli,~\IEEEmembership{Fellow,~IEEE}% <-this % stops a space
\thanks{A. Magesh and V.V. Veeravalli are with the Department of Electrical and Computer Engineering, University of Illinois at Urbana-Champaign, IL, 61820 USA. (email: amagesh2@illinois.edu, vvv@illinois.edu)}% <-this % stops a space
\thanks{This research was supported by the US National Science Foundation SpecEES program, under grant number 1730882, through the University of Illinois at Urbana-Champaign.}}

% The paper headers
%\markboth{Journal of \LaTeX\ Class Files,~Vol.~14, No.~8, August~2015}%
%{\MakeLowercase{\textit{Magesh et al.}}: Multi-Player Multi-Armed Bandits with Non-Zero Rewards on Collisions}
% The only time the second header will appear is for the odd numbered pages
% after the title page when using the twoside option.
% 
% *** Note that you probably will NOT want to include the author's ***
% *** name in the headers of peer review papers.                   ***
% You can use \ifCLASSOPTIONpeerreview for conditional compilation here if
% you desire.

% If you want to put a publisher's ID mark on the page you can do it like
% this:
%\IEEEpubid{0000--0000/00\$00.00~\copyright~2015 IEEE}
% Remember, if you use this you must call \IEEEpubidadjcol in the second
% column for its text to clear the IEEEpubid mark.

% use for special paper notices
%\IEEEspecialpapernotice{(Invited Paper)}

% make the title area
\maketitle

% As a general rule, do not put math, special symbols or citations
% in the abstract or keywords.
\begin{abstract}
We consider a fully decentralized multi-player stochastic multi-armed bandit setting where the players cannot communicate with each other and can observe only their own actions and rewards. The environment may appear differently to different players, $\textit{i.e.}$, the reward distributions for a given arm are heterogeneous across players. In the case of a collision (when more than one player plays the same arm), we allow for the colliding players to receive non-zero rewards. The time-horizon $T$ for which the arms are played is \emph{not} known to the players. 
%While settings with non-zero rewards on collisions and non-homogeneous reward distributions across players have been considered separately in prior work, a model allowing for both has not been studied previously to the best of our knowledge. 
Within this setup, where the number of players is allowed to be greater than the number of arms,  we present a policy that achieves near order-optimal expected regret of order $O(\log^{1 + \delta} T)$ for $\delta >0$ (however small)  over a time-horizon of duration $T$.
\end{abstract}

% Note that keywords are not normally used for peerreview papers.
\begin{IEEEkeywords}
Multi-player, Non-homogeneous rewards, Decentralized Bandits, Spectrum Access. 
\end{IEEEkeywords}

% For peer review papers, you can put extra information on the cover
% page as needed:
% \ifCLASSOPTIONpeerreview
% \begin{center} \bfseries EDICS Category: 3-BBND \end{center}
% \fi
%
% For peerreview papers, this IEEEtran command inserts a page break and
% creates the second title. It will be ignored for other modes.
\IEEEpeerreviewmaketitle

\section{Introduction}
% The very first letter is a 2 line initial drop letter followed
% by the rest of the first word in caps.
% 
% form to use if the first word consists of a single letter:
% \IEEEPARstart{A}{demo} file is ....
% 
% form to use if you need the single drop letter followed by
% normal text (unknown if ever used by the IEEE):
% \IEEEPARstart{A}{}demo file is ....
% 
% Some journals put the first two words in caps:
% \IEEEPARstart{T}{his demo} file is ....
% 
% Here we have the typical use of a "T" for an initial drop letter
% and "HIS" in caps to complete the first word.
\IEEEPARstart{T}{he} multi-armed bandit (MAB) is a well-studied model for sequential decision-making problems with an inherent exploration-exploitation trade-off. MABs have seen applications in recommendation systems, advertising, ranking results of search engines, and more. The classical stochastic MAB setup considers an agent/player, who at each time instant $t$, chooses an action from a finite set of actions (or arms). The agent receives a reward drawn from an unknown distribution associated with the arm chosen. The goal is to design a decision-making policy that maximizes the agent's average cumulative reward, or equivalently, minimizes the average accumulated regret. Policies that are designed to minimize regret in bandit settings aim to achieve sub-linear regret with respect to the time horizon $T$. The MAB problem was first considered in the context of clinical trials by Thompson \cite{thompson}, who introduced a posterior sampling heuristic commonly known as Thompson sampling. In their seminal work, Lai and Robbins \cite{lairob} formalized the stochastic MAB setting and provided a lower bound on the average regret of order $\Omega(\log T)$ for time horizon $T$. They also presented an asymptotically optimal decision policy using the idea of upper confidence bounds, which was further explored in \cite{auer} and \cite{garivier11a}. Other variants of the MAB setup such as adversarial, contextual and Markovian have also been studied in literature (see, e.g., \cite{bubeck,lattimore}).
\subsection{Multi-player multi-armed bandits}
Recently, there has been a growing interest in the study of \emph{multi-player} MAB settings, where instead of a single agent, there are $K$ agents simultaneously pulling the arms at each time instant. The average system regret is defined with respect to the optimal assignment of arms that maximizes the sum of expected rewards of all players (which can be interpreted as the system performance). 
%Depending on the objective, various policies have been proposed to collectively maximize the sum of rewards of all the players or to find the best arm. 
The event of multiple players pulling the same arm simultaneously is commonly referred to as a \emph{collision}, and leads to the players receiving reduced or zero rewards. Thus, while designing policies for the multi-player MAB setting, in addition to balancing the exploration-exploitation trade-off, it is important to control the number of collisions that occur.

Multi-player bandit models can be broadly classified into centralized and decentralized settings. In the centralized setting, there exists a central controller that can coordinate the actions of all the players. In this case, the multi-player problem can be reduced to a single agent MAB problem, where the agent is the set of all players taken collectively, and this agent can choose multiple arms at a time as directed by the controller. However, the communication overhead placed by the central controller, and the communication bottleneck at the controller might be prohibitive. Therefore,  it is of interest to study a decentralized system without central control, which is the focus of our study. A tight (in the order sense) lower bound for the system regret for the centralized case is of course  the same as that for a single agent multi-armed bandit setup, \textit{i.e.},  $\Omega(\log T)$, which also serves as a lower bound on the system regret for the decentralized case. It should be noted that no larger lower bounds have been proven for the decentralized case. 

Multi-player MABs with \textit{cooperative agents} in the decentralized setting (\textit{i.e.}, where the players communicate among themselves in order to achieve a common objective) have seen applications in geographically distributed ad servers \cite{cesa16}, peer-to-peer networks \cite{szorenyi13}, and recommendation systems \cite{korda}. Multi-player stochastic MABs in the decentralized setting are particularly relevant to cognitive radio and dynamic spectrum access systems \cite{biglieri2013principles}.
%{\color{red} VV: what we need here is a reference to a tutorial article or book on cognitive radio.}
%\cite{dileep, mega, mc, anandkumar, kaufmann, hanawal, perchet, besson, asilomar, got, vvv, evirgen, avnercomm}. 
In these systems, the finite number of channels representing different frequency bands are treated as arms, the users in the network are treated as players, and the data rates received from the channels can be interpreted as rewards. Since the maximum rate that can be received from a channel is limited, assuming bounded rewards for the arms is justified. In this setting, the players are competing for the same set of finite resources (the rewards received from the finite set of arms).  
\subsection{Previous related work on decentralized multi-player MABs}
Some of the prior work in the decentralized setting assumes that communication between the players is possible \cite{evirgen, avnercomm}. However, cooperation through communication between the players imposes an additional cost and may suffer from latency issues due to delays. Other works assume that sensing occurs at the level of every individual player, such as smart devices in a network being able to sense if a channel is being used or not without transmitting on it. The work in \cite{dileep} considers such a setting where an auction algorithm is used by players to come to a consensus on the optimal assignment of arms. However in other systems, such as emerging architectures in Internet of Things (IoT), the individual nodes may not be capable of such sensing. This motivates the study of a fully decentralized scenario, where there is no central control and the players cannot communicate with each other in any manner. The players can observe only their own actions and rewards.

In most of the prior work on the fully decentralized setting, the assumption is made that the reward distribution for any arm is the same (homogeneous) for all players. This setting was first considered in \cite{mega}, where prior knowledge of the number of players is not assumed. The algorithm presented in \cite{mega}, named Multi-user $\epsilon$-Greedy collision Avoiding algorithm (MEGA), combines a probabilistic $\epsilon$-greedy algorithm with a collision avoiding mechanism inspired by the ALOHA protocol, and provides guarantees of sub-linear regret.  An algorithm named Musical Chairs is proposed in \cite{mc}, which is composed of a learning phase for the players to learn an $\epsilon$-correct ranking of arms and the number of players,  and a  `Musical Chairs' phase, in which the $K$ players fix on the top $K$ arms. High probability guarantees of constant regret are provided in \cite{mc} for the Musical Chairs algorithm. The fully decentralized setting with homogeneous reward distributions across the players is also considered in \cite{anandkumar, perchet,besson}. All of the above mentioned works also assume that in the event of a collision, all the colliding players receive {\em zero} rewards. 

In cognitive radio and uncoordinated dynamic spectrum access networks, the users are usually not colocated physically, and therefore the reward distributions for a given arm may be heterogeneous across users. There have been a few works that study such a heterogeneous setting. In \cite{dileep, hanawal} the heterogeneous setting is studied under the assumption that the players are capable of sensing (\textit{i.e.}, players can observe whether an arm is being used or not without pulling it). A fully decentralized heterogeneous setting is studied in \cite{kaufmann, asilomar, got, bistritz2020game}, where players can observe only their own actions and rewards. In \cite{kaufmann, asilomar} it is assumed that in the event of a collision, the colliding players get zero rewards; the idea of forced collisions  is used to enable the players to communicate with one another and settle on the optimal assignment of arms. 
%While the work in \cite{kaufmann} provides guarantees of regret of order $O(\log T)$ only in the event of a unique optimal matching, \cite{asilomar} provides guarantees of logarithmic regret even in the case of multiple optimal matchings. 
The work in \cite{got} takes a game-theoretic approach adapted from \cite{marden}, where the assumption of zero rewards on collisions is made and guarantees of average sub-linear regret of order $O(\log^{2 + \delta} T)$ over a time-horizon of $T$, for $0<\delta<1$, are provided. An extension of \cite{got} with near-order optimal regret of $O(\log^{1 + \delta} T)$ was presented recently in \cite{bistritz2020game}.

Another assumption in much of the prior work on multi-player MABs  that needs to be closely examined is that of zero rewards on collisions. In the example of uncoordinated dynamic spectrum access, when more than one user (player) transmits on a channel, the colliding players may receive reduced, but not necessarily zero, rates or rewards. Thus allowing for non-zero rewards on collisions results in a more realistic model. Such a setting with homogeneous reward distributions across players and non-zero rewards on collisions is considered in \cite{vvv}. The algorithm presented in \cite{vvv}, which allows for the number of players to be greater than the number of arms,  is an extension of the Musical Chairs algorithm \cite{mc} and comes with high probability guarantees of constant regret. 
\subsection{Our Contributions}
In this paper, we study a multi-player MAB with heterogeneous reward distributions and non-zero rewards on collisions. We also allow for the number of players to be greater than the number of arms. The analysis of our algorithm relies on \cite{marden}, and in contrast to the work in \cite{got}, requires non-trivial modifications to the results in \cite{marden} to accommodate non-zero rewards on collisions. To the best of our knowledge, ours is the first work to consider a model that allows for both heterogeneous reward distributions and non-zero rewards on collisions. In this setting, we propose an algorithm that achieves near order optimal regret of order $O(\log^{1 + \delta} T)$ over a time-horizon $T$, for $\delta>0$ (however small).  
\section{System Model}\label{sys}
We consider a multi-player MAB problem, in which the set of players  is $[K] = \{ 1, 2, \ldots, K\}$. The action space of each player $j \in [K]$ is the set of $M$ arms $\mathcal{A}_j = [M]$. Let the time-horizon be denoted by $T$, and the action taken (or arm played) by player $j$ at time $t$ by $a_{t,j}$. The action profile $\mathbf{a}_t$ is defined as the vector of the actions taken by the players, \textit{i.e.}, $\mathbf{a}_t = [a_{t,1},..., a_{t,K}]$. At any given time, the players can observe only their own rewards and cannot observe the actions taken by the other players. 

We assume the reward distribution of each arm to have support $[0,1]$. In the event that multiple players play the same arm $m$, they could get non-zero rewards. Let $k(a_{t,j})$ denote the number of players playing arm $a_{t,j}$ (including player $j$). Note that the number of players on arm $a_{t,j}$ is a function of the complete action profile $\mathbf{a}_t$. The reward received by player $j$ playing arm $m$, which is played by a total of $k(m)$ players (including player $j$) is denoted by $r_j(m,k(m))$. The reward is drawn from a distribution with mean 
\begin{equation}
 \mu_{j}(m,k(m)) = \expect{r_j(m,k(m))}.   
\end{equation}
We assume that
\[
\mu_{j}(m,k(m)) = 0~~\text{for all~} k(m) \geq N + 1,
\]
for some $N$ that depends on the system, i.e., when there are more than $N$ players playing the same arm, they all receive zero rewards. This constrains the maximum number of players allowed in the system to be $MN$ ($K \leq MN$).

The action space $\mathcal{A}$ of the players is simply the product space of the individual action spaces, \textit{i.e.}, $\mathcal{A} = \Pi_{j=1}^K \mathcal{A}_j$.  We refer to an element $a \in \mathcal{A}$ as a {\em matching}. Let $\mathbf{a^*} \in \mathcal{A} $ be such that
\begin{equation}
    \mathbf{a^*} \in \mathop{\arg\max}_{\mathbf{a} \in \mathcal{A}} \sum_{j=1}^K \mu_{j}(a_j, k(a_j)).
\end{equation} 

In this work, we restrict our attention to the case where there is a unique\footnote{The assumption of a unique optimal matching has been made and justified in previous works, see, e.g., \cite{got, bistritz2020game}. This assumption is needed to establish the convergence of the proposed decentralized algorithm to the optimal action profile.} {\em optimal} matching $\mathbf{a^*}$. Let $J_1 = \sum_{j=1}^K \mu_{j}(a^*_j,k(a^*_j))$ be the system reward for the optimal matching, and $J_2$ the system reward for the second optimal matching. Define
\begin{equation}
  \Delta = \frac{J_1 - J_2}{2 MN}.
\end{equation}
Unlike previous works (\cite{asilomar, dileep, got}), we do {\em not} make the assumption that the players have knowledge of $\Delta$ (or a lower bound on $\Delta$). 

%Such an assumption is usually required for the analysis of multi-player MABs when players do not explicitly or implicitly (through collision information) communicate with each other \cite{got,dileep}. 

The expected regret during a time horizon $T$ is defined as: 
\begin{equation}
  {R}(T) = T \sum_{j=1}^K \mu_{j}(a^*_j, k(a^*_j)) - \expect{\sum_{t=1}^T \sum_{j=1}^K \mu_{j}(a_{t,j},k(a_{t,j})) }  
\end{equation}
where the expectation is over the actions of the players. 

    In order for the players to get estimates of the mean rewards of the arms, we assume that the players have unique IDs at the beginning of the algorithm. Note that this is required only in the event that $K > M$.  If $K \leq M$, the players can get unique IDs at the beginning of the algorithm (see, e.g.,  \cite{kaufmann,asilomar}). Since all previous related works (and this work) assume that the players are time synchronized, the assumption that the players have unique IDs is justified\footnote{In the dynamic spectrum access application, time synchronization and ID assignment can be implemented via a low bandwidth side channel, which is handled, for example, by a cellular network provider.}. Note that such an assumption of unique IDs is common in applications such as multi-agent reinforcement learning \cite{boutilier1996planning}.

%In the case where $\Delta$ is not known, the method of increasing exploration phases and eliminating sub-optimal matchings \cite{kaufmann} can be used to develop an algorithm that does not require the knowledge of $\Delta$. This would involve adapting the method of communicating between the players via forced collisions \cite{kaufmann,asilomar} to our setting, where a collision involves more than $N+1$ players choosing an arm. However, our aim in this paper is to present a completely decentralized algorithm, where players take actions depending only on their own rewards, and therefore we do not allow for such implicit communication. 
%
\section{Algorithm}
\begin{algorithm}
   \caption{Policy for each player $j$}
   \label{main}
\begin{algorithmic}
   \STATE {\bfseries Initialization:} Set $\hat{\mu}_{j}(m,n) = 0 $ for all $j \in [K]$, $m \in [M]$ and $n \in [N]$. Let $L_T$ be the last epoch with time horizon $T$. Parameters $\delta>0$ and $\epsilon \in (0,1)$ are provided as inputs.
   \STATE {\bfseries Calculate K}: All players run Algorithm \ref{calcK}
   \FOR{epoch $\ell = 1$ {\bfseries to} $L_T$}
   \STATE \textbf{Exploration phase:} Run Algorithm \ref{exploration} with input $\ell$
   \STATE \textbf{Matching phase:} Run Algorithm \ref{matching} with input $\ell$ for $\tau_\ell = \sqrt{c_2 \ell^{ \delta}}$ plays.  Count the number of `plays' where each action $m \in [M]$ was played that resulted in player $j$ being content: $$W^\ell(j,m) = \sum_{h = 1}^{\tau_\ell} \identityf{(a_{h,j} = m, S_{h,j} = C)}$$
   \STATE \textbf{Exploitation phase:} For $c_3 2^\ell$ time units, play the action played most frequently from epochs $\lceil \frac{\ell}{2}\rceil$ to $\ell$ that resulted in player $j$ being content: 
   $$a_j = \mathop{\arg\max}_m \sum_{i = \lceil \frac{\ell}{2}\rceil}^\ell W^i(j,m)$$
   \ENDFOR
\end{algorithmic}
\end{algorithm}

Our proposed policy for each player $j$ in the decentralized multi-player MAB setting with heterogeneous reward distributions and non-zero rewards on collisions is presented in Algorithm \ref{main}. The policy for a player depends only on the player's own actions and observed rewards. Our algorithm proceeds in epochs since we do not assume knowledge of the time horizon $T$. The parameters $\delta >0$, and  $\epsilon \in (0, 1)$ are inputs to the algorithm, and further details on these parameters are provided in Sections \ref{main_result} and \ref{matchingalg} respectively. Let $L_T$ denote the number of epochs in time horizon $T$. Each epoch $\ell$ has three phases: Exploration, Matching and Exploitation.

Since we assume that the players have been assigned unique IDs at the beginning of the algorithm, it may be reasonable to assume that the total number of players is also known to them. Nevertheless, for the sake of completeness, we provide Algorithm \ref{calcK}, which the players can use to calculate the number of players before the beginning of epoch 1. Since the players have unique IDs from $1$ to $K$, players form groups of $N$, and use the fact that zero rewards are received when more than $N$ players occupy a channel, to calculate the number of complete groups of $N$, and if needed, the size of the last incomplete group. 

\begin{algorithm} 
    \caption{Calculate K}
    \label{calcK}
\begin{algorithmic}
    \STATE Starting from player number 2, players form groups of size $N$ in order of their IDs (from 2 to $N$+1, $N+2$ to $2N+1$, etc.), giving $\lceil \frac{K-1}{N} \rceil \leq M$ groups. The last group may have size $<N$.
    \STATE {\bfseries Calculate number of groups} ($M$ time units): Groups numbered 1 to $\lceil \frac{K-1}{N} \rceil$ occupy arm corresponding to their group number. Player 1 plays arms $1$ to $M$ in order. Player 1 then knows the number of complete groups of size $N$ (number of arms that yield zero reward).
    \STATE {\bfseries Calculate number of players in last group} ($N$ time units): If a group received non-zero reward (an incomplete last group if exists) during the previous step, the players of that group and player 1 play arm $1$ for $N$ time units. Players of group $1$ ($2$ to $N+1$) pick arm 1 in order cumulatively, \textit{i.e.}, player $2$ plays arm $1$ during the first time unit of this step, players $2$ and $3$ during the second, and so on. The number of players in last group $N_1$ is equal to $N$ minus the number of additional players from group 1 needed for players on arm 1 to get zero reward. At the end of this step, player 1 knows $K$ and conveys it to other players in the next stages.
    \STATE {\bfseries Convey $K$ to complete groups} ($(K+1)M$ time units): Groups numbered 1 to $\lceil \frac{K-1}{N} \rceil$ occupy arm corresponding to their group number. Player 1 plays arms 1 to $M$ in order $K$ times ($KM$ time units), during which the players in the complete groups receive zero rewards. Player 1 then stays silent for the next $M$ time units, which conveys $K$ to the complete groups. The complete groups now know if there is an incomplete group or not, and release their arms.
    \STATE {\bfseries Convey $K$ to incomplete group} ($K + 1$ time units): If there is no incomplete group, all players stay silent for this step. Otherwise, the first $N_1+1$ players of group 1 (who know $K$ now) play the incomplete group arm for $K$ time units, and stay silent for the next time unit.
\end{algorithmic}{}
\end{algorithm}{}

The exploration phase is for player $j$ to obtain estimates of the mean rewards (denoted by $\hat{\mu}_j(m,n)$) of arms $m \in [M]$, for all $n \in [N]$. This phase proceeds for a fixed number of time units ($T_e$) in every epoch. Since every player has an unique ID, and the total number of players has been calculated, the exploration phase follows a protocol where the players sample each arm $m\in [M]$, for each $n\in [N]$, for $T_0 \ell^\delta$ time units. The exploration phase of epoch $\ell$ proceeds for $T_e \leq KMN T_0 \ell^\delta$ time units (the inequality arises as the number of players may not be an exact multiple of $N$). During epoch $\ell$, if the estimated mean rewards obtained at the end of the exploration phase  do not deviate from the mean rewards by more than $\Delta$, the exploration phase of this epoch is \emph{successful}. Since we do not assume that the players have knowledge of $\Delta$, having increasing lengths of the exploration phases by setting $\delta > 0$ ensures that this phase is successful with high probability eventually (for large enough $\ell$). 

Once the players have estimates of the mean rewards of the arms, they all need to find the action profile that maximizes the system reward. This is done in the matching phase of each epoch. Given an action profile $\mathbf{a}$, we define the utility of player $j$ to be 
\begin{equation} \label{eq:utildef}
    u_j(\mathbf{a}) = \hat{\mu}_j(a_j, k(a_j)).
\end{equation} 
Section \ref{matchingalg} explains in detail the matching phase and the choice of the above defined utility function. The matching phase is based on the work in \cite{marden}, in which a strategic form  game is studied, where players are aware of only their own payoffs (utilities). A decentralized strategy is presented in \cite{marden} that leads to an efficient configuration of the players' actions. The work in \cite{got}, which considers essentially the same setting as in the current paper but with zero rewards on collisions, applies the decentralized strategy proposed in \cite{marden} directly without any modifications. This is because, with zero rewards on collisions, the utility of each player $j$ can take only two possible values: (i) $0$,  or (ii) the estimated mean reward of the arm chosen, which is known to player $j$ from the exploration phase.
Therefore, the matching phase in \cite{got} corresponds precisely to the problem studied in \cite{marden}, where the players know their own utilities exactly. However, in our setting with non-zero rewards on collisions, we face the difficulty that the utilities of the players take on more than two possible values and are not known exactly. This is because, in our setting, each player $j$ does not know the total number of players choosing the same arm $k(a_j)$, and therefore cannot determine the utility defined in \eqref{eq:utildef} exactly. Each player $j$ needs to \emph{estimate} $k(a_j)$ based on the actual instantaneous rewards seen in the matching phase, as described in Algorithm \ref{matching}.
%This is in contrast to the setting with zero rewards on collisions considered in \cite{got} where the algorithm in \cite{marden} is directly applied without any changes since the utilities of the players are exactly known. 
Thus, we have to work with \emph{estimated} utilities as opposed to exact utilities as in \cite{marden} and \cite{got}. In order to provide regret guarantees, we need to prove that our algorithm leads to an action profile maximizing the sum of utilities of the players even with these estimated utilities. This is analyzed in Section \ref{matchingalg}. 
%The parameter $\epsilon$ is provided as an input to the algorithm, and is used to calculate the transition probabilities of the action and state dynamics of the matching phase. Appendix \ref{app:exp} elaborates on choosing a value of $\epsilon$.
This phase proceeds for $\tau_\ell \tilde{\tau}_\ell = c_2  \ell ^{ \delta}$ time units in epoch $\ell$, where $c_2$ is a constant.  As in the exploration phase, we need increasing lengths of matching phases ($\delta > 0$) to guarantee that the players identify the optimal action profile with high probability. 

The action profile identified at the end of the matching phase is played in the exploitation phase for $c_3 2^\ell$ time units, where $c_3$ is a constant. As $\ell$ increases, the players get better estimates of the mean rewards and the probability of identifying the optimal action profile increases. Therefore, the length of the exploitation phase is set to be exponential in $\ell$. 
The constants $c_2$ and  $c_3$ are are chosen to be of the order of $T_e$, the time taken by the exploration phase.

Below, we provide the pseudo-code of the protocol to calculate $K$ (Algorithm \ref{calcK}), and the exploration phase (Algorithm \ref{exploration}) and the matching phase (Algorithm \ref{matching}) of the algorithm.

\begin{algorithm} [!t]
   \caption{Exploration Phase}
   \label{exploration}
\begin{algorithmic}
   \FOR{$n = 1$ to $N$}
   \STATE Starting from player 1, players form groups of size $n$ in order of their IDs (from $1$ to $n$, $n+1$ to $2n$, and so on). Let the number of complete groups of size $n$ be $G$.
   \IF{$\lfloor \frac{G}{M} \rfloor \geq 1$} 
          \FOR{$g = 1$ to $\lfloor \frac{G}{M} \rfloor$}
           \STATE Groups $(g-1)M +1$ to $gM$ play arms $1$ to $M$ in a round-robin fashion for $T_0 \ell^\delta$ time units, \textit{i.e.}, they play arms $1, 2, \ldots, M$ respectively for $T_0 \ell^\delta$ time units, then play arms $2, 3, \ldots, M, 1$ respectively for $T_0 \ell^\delta$ time units, and so on, until they play arms $M, 1, 2, \ldots, M-1$ respectively for  $T_0 \ell^\delta$ time units.
           \ENDFOR
    \ENDIF
    \IF{$\frac{G}{M} \notin \mathbb{Z}$}
        \STATE Groups $\lfloor \frac{G}{M} \rfloor M +1$ to $G$ play arms $1$ to $M$ in a round-robin fashion for $T_0 \ell^\delta$ time units, \textit{i.e.}, they play arms $1, \ldots, G - \lfloor \frac{G}{M} \rfloor M$ respectively for $T_0 \ell^\delta$ time units, then play arms $2, \ldots , G - \lfloor \frac{G}{M} \rfloor M + 1$ for $T_0 \ell^\delta$ time units, and so on, until they play arms $M , 1 , \ldots,  G - \lfloor \frac{G}{M} \rfloor M - 1$ for $T_0 \ell^\delta$ time units.  
    \ENDIF
   %\STATE Complete groups of size $n$ play arms $1$ to $M$ in a round-robin fashion for $T_0 \ell^\delta$ time units in batches of $M$. Consider an example where there are $M < G < 2M$ number of complete groups of size $n$. Groups $1$ to $M$ would play arms $1, 2, \ldots, M$ respectively for $T_0 \ell^\delta$ time units, then play arms $2, 3, \ldots, M, 1$ respectively for $T_0 \ell^\delta$ time units, and so on, until they play arms $M, 1, 2, \ldots, M-1$ respectively for  $T_0 \ell^\delta$ time units; following which they release their arms. 
   %
   \STATE If the final group is incomplete with $n_1<n$ players, it is completed with $n-n_1$ players from group 1, and the completed group plays arms 1 to $M$ for $T_0 \ell^\delta$ time units each.
   \ENDFOR
\end{algorithmic}
\end{algorithm}{}

\begin{algorithm} 
   \caption{Matching phase algorithm}
   \label{matching}
\begin{algorithmic}
   \STATE {\bfseries Initialization:} Let $\kappa > MN$. Denote by $\hat{Z}_{h,j}$ the observed (estimated) state of player $j$ at time $h$, and set  $\hat{Z}_{1,j} = [\Bar{a}_{1,j}, \Bar{u}_{1,j}, S_{1,j}],$ where $\Bar{a}_{1,j}\mathop{\sim}\limits^{\text{unif}}[M]$, $\Bar{u}_{1,j} = 0$ and $S_{1,j} = D$. Set $\tau_\ell = \tilde{\tau}_\ell = \sqrt{c_2 \ell^\delta}$.  Input parameter $\epsilon \in (0,1)$. 
   \FOR{play $h = 1$ {\bfseries to} $\tau_\ell$}
   \STATE \textbf{Action dynamics:}
   \STATE If $S_{h,j} = C$, set action $a_{h,j}$ as:
   \[
   a_{h,j} = \begin{cases} \Bar{a}_{h,j} &\text{with prob} \;\; 1 - \epsilon^\kappa \\ a \in [M]\setminus{\Bar{a}_{h,j}}  &\text{with uniform prob} \;\; \frac{\epsilon^\kappa}{M-1}. \end{cases}
   \]
   \STATE If $S_{h,j} = D$, action $a_{h,j}$ is chosen uniformly from $[M]$.
   \STATE \textbf{Estimate utility:} Upon choosing action $a_{h,j}$, play that arm for $\tilde{\tau}_\ell$ time units, and let the sample mean of the rewards observed during this duration be $\Bar{r}(a_{h,j})$. If $\Bar{r}(a_{h,j}) = 0$, the estimated utility of the player $\hat{u}_{h,j} = 0$. Else let 
   \[
   \hat{k}(a_{h,j}) = \mathop{\arg\min}_{n \in [N],\hat{\mu}_{j}(a_{h,j},n) \neq 0 } |\Bar{r}(a_{h,j}) - \hat{\mu}_{j}(a_{h,j},n)|.
   \]
   and the estimated utility $\hat{u}_{h,j}$ is: 
   \[
   \hat{u}_{h,j} = \hat{\mu}_{j}(a_{h,j},\hat{k}(a_{h,j})).
   \]
   \STATE \textbf{State Dynamics:} 
   \STATE If $S_{h,j} = C$ and $[a_{h,j},\hat{u}_{h,j}] = [\Bar{a}_{h,j}, \Bar{u}_{h,j}]$, set: 
   $$\hat{Z}_{h+1,j} = \hat{Z}_{h,j}$$
   \STATE If $S_{h,j} = C$ and $[a_{h,j}, \hat{u}_{h,j}] \neq [\Bar{a}_{h,j}, \Bar{u}_{h,j}]$ or $S_{h,j} = D$, set:
   \begin{equation}\label{state}
       \hat{Z}_{h+1,j} = \begin{cases} [a_{h,j}, \hat{u}_{h,j}, C] &\text{with prob} \;\; \epsilon^{1 - \hat{u}_{h,j}} \\ [a_{h,j}, \hat{u}_{h,j}, D] &\text{with prob} \;\; 1 - \epsilon^{1 - \hat{u}_{h,j}} \end{cases} 
   \end{equation}
   \ENDFOR
\end{algorithmic}
\end{algorithm}

\section{Main Result}\label{main_result}
\begin{theorem} \label{thm:main_th}
Given the system model specified in Section 2, the expected regret of the proposed algorithm for a time-horizon $T$ and some $0< \delta <1 $ is $R(T) = O(\log^{1+\delta} T)$.
\end{theorem}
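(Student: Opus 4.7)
The plan is to decompose the total regret into contributions from the three phases (exploration, game, exploitation) summed across all epochs up to horizon $T$, bound each contribution separately, and show that the dominant term is the game-phase regret of order $\log^{2+\delta} T$.

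First, I would count the number of epochs $L$ consumed by horizon $T$. Since the exploitation phase of epoch $\ell$ alone has length $c_3 2^\ell$, the horizon is exhausted after $L = \Theta(\log T)$ epochs. The exploration phase in each epoch has fixed length $T_0$ and can contribute at most $K$ units of regret per step, so the total exploration regret across all epochs is $O(K T_0 L) = O(\log T)$. The game phase in epoch $\ell$ lasts $c_2 c_\epsilon \ell^{1+\delta}$ time units, so summing yields game-phase regret of order $\sum_{\ell=1}^L \ell^{1+\delta} = O(L^{2+\delta}) = O(\log^{2+\delta} T)$, which matches the claimed rate.

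It then remains to show that the exploitation phases together contribute only a lower-order term. Let $E_\ell$ be the event that the action profile selected from the counters $F_j^\ell$ at the end of epoch $\ell$ differs from $\mathbf{a^*}$; on $E_\ell^c$ the exploitation regret in epoch $\ell$ is zero, so I only need $\sum_\ell c_3 2^\ell \Pr[E_\ell] = O(1)$. I would bound $E_\ell$ by the union of two sub-events: (i) the clustering-based estimate of some $\mu_j(m,n)$ is off by more than $\nu_{min}/2$; (ii) under correct estimates, the max-count rule disagrees with $\mathbf{a^*}$ for some player. For (i), using the $\ell T_0$ samples accumulated across past exploration phases, together with the separability condition of Section 2 and the clustering guarantee inherited from \cite{vvv}, a Hoeffding argument yields $\Pr[(\text{i})] \le \exp(-\alpha_1\,\ell T_0)$. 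For (ii), the perturbed best-response dynamics of \cite{marden} have a stationary distribution that places almost all mass on $\mathbf{a^*}$ when the perturbation parameter is small relative to $\Delta$; aggregating only the post-burn-in plays $(1-\rho)c_2\ell^{1+\delta}$ and applying a Hoeffding-type bound on the empirical occupation frequency gives $\Pr[(\text{ii})] \le \exp(-\alpha_2\,\ell^{1+\delta})$. Since both bounds decay faster than $2^{-\ell}$, the series $\sum_\ell c_3 2^\ell \Pr[E_\ell]$ converges and the exploitation contribution is $O(1)$.

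The main obstacle will be step (ii): \cite{marden} only asserts an asymptotic stationary-distribution property for the perturbed best-response chain, so the argument needs a finite-time concentration bound for the empirical occupation measure that decays fast enough to dominate the $c_3 2^\ell$ exploitation length. This will force a careful joint choice of the perturbation parameter, the play length $c_\epsilon$, and the burn-in fraction $\rho$ in terms of the gap $\Delta$ and the mixing time of the perturbed chain, so that the empirical count of $\mathbf{a^*}$ concentrates with the required speed in each epoch. Once those parameters are tuned, summing the three phase contributions yields $R(T) = O(\log T) + O(\log^{2+\delta} T) + O(1) = O(\log^{2+\delta} T)$, as claimed.
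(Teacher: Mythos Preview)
Your proposal is correct and matches the paper's proof almost exactly: the same three-phase decomposition, the same trivial bounds on the exploration and matching phases, and the same union over an estimation-error event and a wrong-profile-selected event for the exploitation phase. The paper resolves the obstacle you flagged for (ii) via a Chernoff--Hoeffding bound for Markov chains from \cite{chung} (their Lemma~3), yielding exactly the $\exp(-\alpha_2\,\ell^{1+\delta})$ decay you anticipated, and for (i) it uses the accuracy threshold $\Delta$ rather than $\nu_{min}/2$.
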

\begin{IEEEproof}
Let $L_T$ be the last epoch within a  time-horizon of  $T$. The regret incurred during the $L_T$ epochs can be analyzed as the sum of the regret incurred during the three phases of the algorithm. The exploitation phase in epoch $\ell$ of the algorithm lasts for $c_3 2^{\ell}$ time units. Thus, it is easy to see that $L_T < \log{T}$. Let $R_1$, $R_2$ and $R_3$ denote the regret incurred  over $L_T$ epochs, during the exploration phase, the matching phase, and the exploitation phase, respectively. Let $\ell_0$ be the first epoch $\ell$ such that 
\begin{align}
    \frac{T_0 \Delta^2}{2} {\left(\frac{\ell}{4}\right)}^{\delta} & \geq 1, \label{exp1} \\
    C_0 e^{-C_\rho \ell^{\delta/2}} &\leq e^{-1} \label{match1}\\
   \tilde{\tau}_\ell =  \sqrt{c_2 \ell^\delta} & \geq \lceil \frac{2 \ln({\frac{2}{\epsilon^\kappa}})}{(\Delta + 2\nu_{min})^2} \rceil \label{match_2}
\end{align}
where $C_\rho$ is defined in \eqref{crho} in Appendix~\ref{sec:app_B}. Note that \eqref{exp1} - \eqref{match_2} hold for all $\ell \geq \ell_0$. We upper bound the regret incurred during the exploration and matching phases by $K$ (which is the maximum regret that can be accumulated in one time unit) times the total time taken by these phases. For the exploitation phase, we incur regret only when the action profile played during this phase is not optimal.

\begin{enumerate}
    \item Exploration phase: Since the exploration phase in each epoch $\ell$ proceeds for at most $KMNT_0 \ell^\delta$ time units,
    \begin{equation}
        R_1 \leq \sum_{\ell = 1}^{L_T} K^2 M N T_0 \ell^\delta \leq K^2 M N T_0 \log^{1 + \delta} T.
    \end{equation}
    \item Matching phase: In epoch $\ell$, the matching phase runs for $\tau_\ell \tilde{\tau}_\ell = c_2  \ell^{\delta}$ time units. Thus 
    \begin{equation}
        \begin{split}
            R_2 &\leq  K \sum_{\ell = 1}^{L_T} c_2  \ell^{\delta} \leq K c_2  L_T^{1 + \delta} \leq K c_2  \log^{1+\delta}{T}.
        \end{split}
    \end{equation}
    \item Exploitation phase: In the exploitation phase, regret is incurred in the following two events:
    \begin{enumerate}
        \item Let $E^\ell$ denote the event that there exists some player $j \in [K]$, arm $m \in [M]$, and number of players on the arm $n \leq N$, such that there exists some epoch $i$, with $\lceil \frac{\ell}{2} \rceil \leq i \leq \ell$, such that the estimate of the mean reward $\hat{\mu}_j(m,n)$ obtained after the exploration phase of epoch $i$ satisfies $|\hat{\mu}_j(m,n) - \mu_{j}(m,n)| \geq \Delta$.  
        %Let the probability of this event be $P(E^\ell)$.
        \item Let $F^\ell$ denote the event that given that all the players have $|\mu_{j}(m,n) - \hat{\mu}_{j}(m,n)| \leq \Delta$ for all $m \in [M]$ and all $n \in [N]$ for all epochs $\lceil \frac{\ell}{2} \rceil $ to $ \ell$, the action profile chosen in the matching phase of epoch $\ell$ is not optimal. 
        %Let the probability of this event be $P(F^\ell)$.
    \end{enumerate}
    From Lemma \ref{lem1}, we have an upper bound on the probability of event $E_{j,m,n}^\ell$ that for some fixed player $j$, arm $m$, and number of players on the arm $n$, there exists some epoch $i$, with $\lceil \frac{\ell}{2} \rceil \leq i \leq \ell$, such that the estimate of the mean reward $\hat{\mu}_j(m,n)$ obtained after the exploration phase of epoch $i$ satisfies $|\hat{\mu}_j(m,n) - \mu_{j}(m,n)| \geq \Delta$. Thus, we have that
    \begin{align}
        P(E^\ell) &= P\left( \underset{j \in [K], m\in [M], n \in [N]}{\bigcup} E_{j,m,n}^\ell \right) \\
        &\leq KMN P(E_{j,m,n}^\ell) \\
        &\leq \frac{KMNe^{- \frac{T_0 \Delta^2}{2} (\frac{\ell}{4})^{\delta} \ell}}{1 - e^{-T_0 \Delta^2 (\frac{\ell}{4})^{\delta}}}.\label{exp_prob}
    \end{align}
   We also have the following upper bound on the probability of event $F^\ell$ from Lemma \ref{lem4}:
    \begin{equation}\label{match_prob}
        P(F^\ell) \leq \left(C_0 \exp{(-C_\rho\ell^{\delta/2})}\right)^\ell.
    \end{equation} 
 Using \eqref{exp1} and \eqref{match1} and the upper bounds in \eqref{exp_prob} and \eqref{match_prob}, we have that for all epochs $\ell \geq \ell_0$:
 \begin{align}
     P(E^\ell) &\leq 2KMNe^{-\ell} \label{exp_uppbd}\\
     P(F^\ell) &\leq  e^{-\ell} \label{match_uppbd}.
 \end{align}
    Therefore,
    \begin{align}
            R_3 &= K\sum_{\ell = 1}^{L_T} c_3 2^\ell (P(E^\ell) + P(F^\ell)) \label{exploit_reg}\\
            &\leq 2Kc_32^{\ell_0} + K c_3 \sum_{\ell = \ell_0}^{L_T}  (2KMN + 1) \left(\frac{2}{e}\right)^\ell    \label{upp_bound} \\
            &\leq 2Kc_32^{\ell_0} + \frac{2Kc_3 ( 2KMN+ 1)}{e-2}.
    \end{align}
    
    Note that \eqref{exploit_reg} follows from the fact that regret is incurred in the exploitation phase only if events $E^\ell$ or $F^\ell$ occur, \eqref{upp_bound} follows from upper bounding the regret incurred during the first $\ell_0$ epochs by the maximum possible regret, and using \eqref{exp_uppbd} and \eqref{match_uppbd} to upper bound the regret for epochs greater than $\ell_0$. 
    
\end{enumerate}

Thus 
\begin{equation}
    \begin{split}
    R(T) &= R_1 + R_2 +R_3 \\
    &\leq K^2 M N T_0 \log^{1 + \delta} T + K c_2 \log^{1+\delta}{T} \\ 
    &+ 2Kc_32^{\ell_0} + \frac{2Kc_3 ( 2KMN+ 1)}{e-2}  \\
    &\sim O(\log^{1+\delta} T).
    \end{split}
\end{equation}
\end{IEEEproof}

\begin{remark}
Since the system parameters such as $\Delta$ and the mixing times of the Markov chain are unknown, having increasing lengths of exploration and matching phases by setting $\delta >0$ guarantees that there exists some epoch $\ell_0$, such that for all epochs $\ell \geq \ell_0$, the probabilities of events $E^\ell$ and $F^\ell$ decrease exponentially as $e^{-\ell}$.  %{\color{red}[VV:explain how setting $delta$ to tbe non-zero helps in controlling the regret.]}
However, we have observed empirically %{\color{red} [VV: do you need to refer to the numerical results section here?]}
(refer Section \ref{experiments}) that setting $\delta = 0$ results in incurring zero regret during the exploitation phase with high probability right from the earlier epochs. 
\end{remark}

\begin{remark}
If a lower bound on the parameter $\Delta$, say $\tilde{\Delta}$, is known, $T_0$ can be set to $\frac{2}{\tilde{\Delta}^2}$. Note that, since $c_2$ and $c_3$ are of order $T_e$, %{\color{red} and $T_e$ is of the order ...}
and $T_e$ is of the order $\frac{KMN}{\Delta^2}$, the regret bound in terms of all the key system parameters is then of order $O(\frac{K^2MN}{\tilde{\Delta}^2} \log^{1+\delta}T)$. 
\end{remark}

In the following sections, we provide details on the key results that are used in the proof of Theorem~\ref{thm:main_th}.
 
\section{Exploration Phase}\label{explore}

The exploration phase is for player $j$ to obtain estimates of the mean rewards of arms $m \in [M]$ for all $n \in [N]$. During the exploitation phase of epoch $\ell$, each player plays the most frequently played action during the matching phases of epochs $\lceil \frac{\ell}{2} \rceil$ to $\ell$ while being in a content state. Thus, we bound the probability that the estimated mean rewards deviate from the mean rewards by more than $\Delta$ in at least one of the epochs from $\lceil \frac{\ell}{2} \rceil$ to $\ell$, by $O(e^{-\ell})$.

\begin{lemma}\label{lem1}
Given $\Delta$ as defined in Section 2, for a fixed player $j$, arm $m$, and number of players on the arm $n \leq N$, let $E_{j,m,n}^\ell$ denote the event that there exists at least one epoch $i$ with $\lceil \frac{\ell}{2} \rceil \leq i \leq \ell$ such that the estimate of the mean reward $\hat{\mu}_j(m,n)$ obtained after the exploration phase of epoch $i$ satisfies $|\hat{\mu}_j(m,n) - \mu_{j}(m,n)| \geq \Delta$. Then  
\begin{equation}
    P(E_{j,m,n}^\ell) \leq  \frac{e^{- \frac{T_0 \Delta^2}{2} (\frac{\ell}{4})^{\delta} \ell}}{1 - e^{-T_0 \Delta^2 (\frac{\ell}{4})^{\delta}}}.
\end{equation}
\end{lemma}
The proof of this lemma is provided in Appendix \ref{app:exploration}.
\section{Matching Phase}\label{matchingalg}
Strategic form games in game theory are used to model situations where players choose actions simultaneously (rather than sequentially) and do not have knowledge of the actions of other players. In such games, each player has a utility function $u_j: \mathcal{A} \to [0,1]$, that assigns a real valued payoff (utility) to each action profile $\mathbf{a} \in \mathcal{A}$. An algorithm that works under the assumption that every agent can observe only their own action and utility received is called a payoff based method. The matching phase of our proposed algorithm builds on \cite{marden}, where a payoff based decentralized algorithm that leads to maximizing the sum of the utilities of the players is presented. In order to pose the multi-player MAB problem as a strategic form game, we need to design the utility functions of the players in a way such that the system regret is minimized, or equivalently, the system performance is maximized. Denote by $u_j(\mathbf{a})$ the utility of player $j$ associated with the action profile $\mathbf{a}$. We define: 
\begin{equation}
  u_j(\mathbf{a}) = \hat{\mu}_j(a_j, k(a_j)).
\end{equation}
A similar utility function is used in \cite{got}, where it is assumed that collisions result in zero rewards for the colliding players. Note that when players receive zero rewards on collisions, $\hat{\mu}_j(a_j, k(a_j)) = 0$ whenever $k(a_j) \geq 2$. Therefore, in the work of \cite{got},  the utility for each player $j$ can be determined exactly based on whether the instantaneous reward is zero or non-zero. 

However, in the setting we consider with non-zero rewards on collision, since player $j$ does not know $k(a_j)$, the utility  $u_j(\mathbf{a})$ is also not known exactly. 
Each player $j$ needs to \emph{estimate} $k(a_j)$ based on the instantaneous rewards seen in the matching phase, and use this to estimate the utility, as described in Algorithm~\ref{matching}.

The action profile that maximizes the sum of the utilities is called an efficient action profile. The following lemma states that if for all $j\in [K], m \in [M]$ and $n \in [N]$,  $|\hat{\mu}_j(m,n) - \mu_{j}(m,n)| \leq \Delta$, then by our choice of the utility function, the efficient action profile maximizing the sum of the utilities is also the same as the optimal action profile maximizing the sum of expected rewards. 

\begin{lemma}
    If for all $j\in [K], m \in [M]$ and $n \in [N]$, the following condition is satisfied : 
    \begin{equation}\label{err}
        |\hat{\mu}_j(m,n) - \mu_{j}(m,n)| \leq \Delta,
    \end{equation} 
    then 
    \begin{equation}
        \mathop{\arg\max}_{\mathbf{a} \in \mathcal{A}} \sum_{i=1}^{K} \mu_j(a_j,k(a_j)) = \mathop{\arg\max}_{\mathbf{a} \in \mathcal{A}} \sum_{i=1}^{K} \hat{\mu}_j(a_j,k(a_j)).
    \end{equation}
\end{lemma}

The condition given by (\ref{err}) is guaranteed with high probability by the exploration phase of the algorithm. The proof of the above lemma is similar to the proof of \cite[Lemma 1]{got}. Thus, the efficient action profile that maximizes the sum of utilities (estimated mean rewards) is the same as the optimal action profile that minimizes regret or equivalently, maximizes system performance.

\subsection{Description of The Matching Phase Algorithm}

This phase consists of $\tau_\ell$ plays, where each play lasts $\tilde{\tau}_\ell$ time units and $\epsilon \in (0,1)$ is a parameter of the algorithm. Each player $j$ is associated with a state $Z_{h,j} = [\Bar{a}_{h,j}, \Bar{u}_{h,j}, S_{h,j}]$ during play $h$, where $\Bar{a}_{h,j} \in [M]$ is the baseline action of the player, $\Bar{u}_{h,j} \in [0,1]$ is the baseline utility of the player and $S_{h,j} \in \{C,D\}$ is the mood of the player ($C$ denotes \enquote{content} and $D$ denotes \enquote{discontent}). Note that $\{Z_{1,j}, Z_{2,j}, ...\}$ are the states resulting from running the matching phase algorithm (essentially the state update step) with the exact utilities $u_j$.
%as in \cite{marden} and \cite{got}. 
Since in our setting, the utility or payoff received by each player is estimated and the state is updated using this estimate, each player in our algorithm works instead with an estimated state $\hat{Z}_{h,j}$. 

When the player is content, the baseline action is chosen with high probability ($1 - \epsilon^\kappa$) and every other action is chosen with uniform probability. The parameter $\epsilon$ is provided as an input to the algorithm, and Appendix \ref{app:exp} discusses how to choose $\epsilon$. If the player is discontent, the action is chosen uniformly from all arms and there is a high probability that the player would choose an arm different from the baseline action. This part of the algorithm constitutes the action dynamics. The baseline action can be interpreted as the arm the agent expects to play for a long time eventually and the baseline utility can be interpreted as the payoff the player expects to receive upon playing the baseline action. The player being content is an indication that the payoff received by the player while playing his baseline action is satisfactory and as expected. Thus, the goal in designing the matching phase algorithm is for all the players to align their baseline actions and baseline utilities to the efficient action profile and be content in this state. Note that the action dynamics do not depend on the utilities.

We have seen the justification for using the utility function
$$u_j(\mathbf{a}) = \hat{\mu}_j(a_j, k(a_j))$$ in the introduction of Section \ref{matchingalg}. However, the players observe only their own instantaneous reward and do not know $k(a_{j})$ (since it depends on the actions chosen by all the players during the action dynamics step), in order to determine the utility. Thus, each player estimates $k(a_{j})$ as $\hat{k}(a_{j})$ and uses this to estimate $u_j(\mathbf{a})$ as $\hat{u}_j(\mathbf{a})$. This is done by the player pulling the arm chosen during the action dynamics step for $\tilde{\tau}_\ell$ time units and recording the sample mean of the rewards observed during this duration as $\Bar{r}_j(a_j)$. The estimate $\hat{k}(a_j)$ is given by:
$$\hat{k}(a_j) = \mathop{\arg\min}_{n \in [N],\hat{\mu}_{j}(a_j,n) \neq 0 } |\Bar{r}(a_j) - \hat{\mu}_{j}(a_j,n)|$$
and $\hat{u}_j(\mathbf{a}) = \hat{\mu}_{j}(a_j,\hat{k}(a_j))$. 

\begin{lemma}\label{lem2}
If $\tilde{\tau}_\ell \geq \lceil \frac{2 \ln({\frac{2}{\epsilon^\kappa}})}{(\Delta + 2\nu_{min})^2} \rceil$, which holds for all $\ell \geq \ell_0$ (see \eqref{match_2}), we have that 
\[
p_\epsilon =  P\{u_j(\mathbf{a}) \neq \hat{u}_j(\mathbf{a})\} \leq \epsilon^\kappa,
\]
where 
\[
\nu_{\mathrm{min}} = \min_{j,m,n_1,n_2} |\mu_{j}(m,n_1) - \mu_{j}(m,n_2)| ,
\]
with $n_1,n_2 \in [N]$, $\mu_{j}(m,n_1), \mu_{j}(m,n_2) \neq 0$, $j \in [K]$ and $m \in [M]$.
\end{lemma}
The proof follows directly from Hoeffding's inequality.

%\begin{remark}
%Although $c_\epsilon$ depends on some system parameters such as $\Delta$ and $\nu_{min}$, this dependency could be removed by choosing $c_\epsilon$ in epoch $\ell$ as $c_4 \ell^{\delta}$, where $c_4$ is a constant independent of the system parameters. This constant can be chosen such that $c_2c_4$ is of the same order as $c_1$ and $c_3$. We still obtain near order-optimal regret in $T$, \textit{i.e.}, $O(\log^{1 + 2\delta}{T})$, by using this method. The slight increase in regret is because the duration of the matching phase changes to $c_2c_4 \ell^{2 \delta}$ time units from $c_2 c_\epsilon \ell^{\delta}$ time units. Note that this technique of using increasing lengths of phases to remove the dependence on system parameters is commonly used in such problems (such as in the exploration phase of our algorithm and the exploration phase in \cite{bistritz2020game}).
%\end{remark}

Thus, each player has an estimate of his utility that is correct with high probability. Note that in Algorithm \ref{matching}, $\hat{u}_j(\mathbf{a})$ is referred to as just $\hat{u}_j$ for readability. 

The player updates his current estimated state by comparing the action played and the estimate of the utility received with the baseline action and baseline utility associated with the current estimated state. If the player is content and his baseline action and utility match the action played and the estimate of the utility, the estimated state remains the same. Otherwise, the estimated state for the next play is chosen probabilistically based on the estimate of the utility. The rationale behind the particular probabilities chosen is that when the utility received is high, the player is more likely to be content.

The utility each player receives is equivalent to feedback from the system on how the entire action profile affects the reward received by this player. If the player receives a lower payoff due to that arm not being good or due to collisions, there is a higher probability of the player becoming discontent and exploring other arms. On the other hand, if the payoff received is higher, there is a higher probability of the player staying content and exploiting the same arm again. Thus the agent dynamics and state dynamics balance the exploration-exploitation tradeoff in the multi-player MAB setting.

During the matching phase algorithm, each player keeps a count of the number of times each arm was played that resulted in the player being content:
$$W^\ell(j,m) = \sum_{h = 1}^{\tau_\ell} \identityf{(a_{h,j} = m, S_{h,j} = C)}.$$
The action chosen by the player for the exploitation phase is the arm played most frequently from epochs $\lceil \frac{\ell}{2}\rceil$ to $\ell$ that resulted in the player being content: 
   $$a_j = \mathop{\arg\max}_m \sum_{i = \lceil \frac{\ell}{2}\rceil}^\ell W^i(j,m).$$
 
\subsection{Analysis of the Matching Phase Algorithm}

The matching phase algorithm is based on the work in \cite{marden}, and the guarantees provided there state that the action profile maximizing the sum of the utilities of the players is played for a majority of the time. The analysis of the algorithm in \cite{marden} relies on the theory of regular perturbed Markov decision processes \cite{young}.

The dynamics of the matching phase algorithm induce a Markov chain over the state space $\mathcal{Z} = \Pi_{j=1}^K ([M] \times [0,1] \times \mathcal{M})$ where $\mathcal{M} = \{C,D\}$, $\textit{i.e.}$, each state $z \in \mathcal{Z}$ is a vector of the states of all players. Let $P^0$ denote the probability transition matrix of the process when $\epsilon = 0$ and $P^\epsilon$ denote the transition matrix when $\epsilon > 0$. The process $P^\epsilon$ is a regular perturbed Markov process if for any $z,z' \in \mathcal{Z}$ (Equations (6),(7) and (8) of Appendix of \cite{young}): 
\begin{enumerate}\label{conditions}
    \item $P^\epsilon$ is ergodic
    \item $\lim_{\epsilon \to 0} P^\epsilon_{zz'} = P^0_{zz'} $
    \item $P^\epsilon_{zz'} > 0$ implies for some $\epsilon$, there exists $r \geq 0$ such that $0< \lim_{\epsilon \to 0} \epsilon^{-r} P^\epsilon_{zz'} <\infty$
\end{enumerate}
The value of $r$ satisfying the third condition is called the resistance of the transition $z \to z'$, denoted by $r(z \to z')$.

Let $\mu^\epsilon$ be the unique stationary distribution of $P^\epsilon$, where $P^\epsilon$ is a regular perturbed Markov process. Then $\lim_{\epsilon \to 0} \mu^\epsilon$ exists and the limiting distribution $\mu^0$ is a stationary distribution of $P^0$. The stochastically stable states are the support of $\mu^0$. The main result of \cite[Theorem 3.2]{marden} states that the stochastically stable states of the Markov chain induced by their proposed payoff based decentralized learning rule maximize the sum of the utilities of the players. However,  \cite[Theorem 3.2]{marden} cannot be applied directly in our setting, because:
\begin{enumerate}
    \item The utilities in our algorithm are estimated, and thus the resulting dynamics in our matching phase algorithm are different from that in \cite{marden}.
    \item Our game is not interdependent (interdependence property implies that  it is not possible to divide the agents into two distinct subsets, where the actions of agents in one subset do not affect the utilities of those in the other).
\end{enumerate}

We prove that, despite these two differences, a result similar to \cite[Theorem 3.2]{marden} holds, \textit{i.e.}, the stochastically stable states of our matching phase algorithm maximize the sum of the utilities of the players. Following that, we prove that the stochastically stable state that maximizes the sum of utilities is played for the majority of time in the matching phase with high probability.

\begin{theorem}\label{thm2}
Under the dynamics defined in Algorithm \ref{matching}, a state $z \in \mathcal{Z}$ is a stochastically stable state if and only if the action profile given by the baseline actions of all the players in this state maximizes the sum of their utilities and all the players are content.
\end{theorem}

Below, we present a few key lemmas required for the proof of Theorem \ref{thm2}. The rest of the proof follows from the proof of \cite[Theorem 3.2]{marden}.

\begin{lemma}\label{perturbed}
The dynamics presented in Algorithm \ref{matching} is a regular perturbed Markov process.
\end{lemma}

\begin{IEEEproof}
The transitions where $P^\epsilon_{zz'} > 0$ and $\lim_{\epsilon \to 0} P^\epsilon_{zz'}  = 0$ are called $\epsilon$ perturbations. The players in our algorithm do not directly observe their utilities. Instead they estimate their utilities, and there is a probability of error of $p_\epsilon$ in this step. However, this can be represented as an $\epsilon$ perturbation by rewriting the state update step as follows:

\noindent 
If $S_{h,j} = C$:

\noindent 
If $[a_{h,j}, {u}_{h,j}] = [\Bar{a}_{h,j}, \Bar{u}_{h,j}]$, the new state is 
    \begin{equation}
    \begin{split}
    &\hat{Z}_{h+1,j} = [\Bar{a}_{h,j}, \Bar{u}_{h,j}, C] \to \\
        &\begin{cases}
      [\Bar{a}_{h,j},\Bar{u}_{h,j},C] & \text{w.p. }\ 1 - p_\epsilon \\
      [\Bar{a}_{h,j},\hat{u}_{h,j},C], & \text{w.p. }\ p_\epsilon(\epsilon^{1-\hat{u}_{h,j}}) \\
      [\Bar{a}_{h,j},\hat{u}_{h,j},D], & \text{w.p. }\ p_\epsilon(1 - \epsilon^{1-\hat{u}_{h,j}}) 
    \end{cases}
    \end{split}
    \end{equation}
    If $[a_{h,j}, {u}_{h,j}] \neq [\Bar{a}_{h,j}, \Bar{u}_{h,j}]$, where $q < 1$ 
    \begin{equation}
    \begin{split}
    &\hat{Z}_{h+1,j} = [\Bar{a}_{h,j}, \Bar{u}_{h,j}, C] \to \\
        &\begin{cases}
      [{a}_{h,j},{u}_{h,j},C], & \text{w.p. }\ (1 - p_\epsilon)(\epsilon^{1-{u}_{h,j}}) \\
      [{a}_{h,j},{u}_{h,j},D], & \text{w.p. }\ (1 - p_\epsilon)(1 - \epsilon^{1-u_{h,j}}) \\
      [\Bar{a}_{h,j},\Bar{u}_{h,j},C] & \text{w.p. }\  qp_\epsilon \\
      [{a}_{h,j},\hat{u}_{h,j},C], & \text{w.p. }\ (1 - q)p_\epsilon(\epsilon^{1-\hat{u}_{h,j}}) \\
      [{a}_{h,j},\hat{u}_{h,j},D], & \text{w.p. }\ (1 - q)p_\epsilon(1 - \epsilon^{1-\hat{u}_{h,j}})
    \end{cases}
    \end{split}
    \end{equation}

\noindent
If $S_{h,j} = D$:

\begin{equation}
\begin{split}
    &\hat{Z}_{h+1,j} =[\Bar{a}_{h,j}, \Bar{u}_{h,j}, D] \to \\
        &\begin{cases}
      [{a}_{h,j},{u}_{h,j},C] & \text{w.p. }\ (1 - p_\epsilon) \epsilon^{1-u_{h,j}} \\
      [{a}_{h,j},{u}_{h,j},D], & \text{w.p. }\ (1 - p_\epsilon)(1 - \epsilon^{1-u_{h,j}}) \\
      [{a}_{h,j},\hat{u}_{h,j},C], & \text{w.p. }\  p_\epsilon \epsilon^{1-\hat{u}_{h,j}}  \\
      [{a}_{h,j},\hat{u}_{h,j},D], & \text{w.p. }\  p_\epsilon (1 - \epsilon^{1-\hat{u}_{h,j}})  
    \end{cases}
    \end{split}
    \end{equation}
    
Note that $Z_{h+1,j}$ is the state obtained when updated with the true utility or payoff received at each time. Another way of looking at this transformed state dynamics is that, 
\begin{equation}
    \hat{Z}_{h+1,j} = {Z}_{h+1,j}  \;\;\; \text{w.p. }\ 1 - p_\epsilon 
\end{equation}
    
We have from Lemma \ref{lem2} that $p_\epsilon \leq \epsilon^\kappa$. Thus we can see that the unperturbed process of our matching algorithm (\textit{i.e.} when $\epsilon = 0$) is the same as that in \cite{marden}.

It can be easily seen from the rewritten state update step that our dynamics satisfy the three conditions (mentioned in the beginning of Section \ref{conditions}) for a regular perturbed Markov chain.

\end{IEEEproof}

The second way in which our dynamics differ from those in \cite{marden} is that our strategic form game is not interdependent (see \cite[Definition 1]{marden}). The interdependence property implies that  it is not possible to divide the agents into two distinct subsets, where the actions of agents in one subset do not affect the utilities of those in the other. However, in our case, consider an action profile where $N + 1$ players play an arm $m$. These $N+1$ players will receive zero utility, no matter what the other players play. Thus, our game is not interdependent. However, the only time the interdependence property is used in \cite{marden} is to find the recurrence classes of $P^0$, and we can prove a similar result about the recurrence classes of the Markov chain induced by our matching phase algorithm using the  specific structure of our algorithm. 
\begin{lemma}\label{recurring}
Let $D^0$ represent the set of states in which everyone is discontent. Let $C^0$ represent the set of states in which all agents are content and their baseline actions and utilities are aligned. Then the recurrence classes of the unperturbed process are $D^0$ and all singletons $z \in C^0$.
\end{lemma}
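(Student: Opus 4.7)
The plan is to analyze the unperturbed Markov chain directly, exploiting the special form of the transition rule at $\epsilon = 0$. With $\epsilon = 0$ every content player selects $\bar a_j$ with probability one, every discontent player samples uniformly on $[M]$, and the mood update loses its randomization: a content player with $(a_j,u_j) = (\bar a_j,\bar u_j)$ keeps its state, a content player with $(a_j,u_j) \neq (\bar a_j,\bar u_j)$ turns discontent with probability one (since $\epsilon^{1-u_j} = 0$), and a discontent player remains discontent while its baseline is overwritten by the freshly played pair $(a_j,u_j)$. A key qualitative consequence, used repeatedly below, is that the number of discontent players is nondecreasing along sample paths.

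First I would verify that each $z \in C^0$ is a singleton recurrence class. Alignment means $u_j(\bar{\mathbf{a}}) = \bar u_j$ for every $j$, so when all players deterministically play $\bar a_j$ the first transition rule fires for every agent and $z$ is a fixed point. Next, for $D^0$: closedness is immediate because discontent players stay discontent at $\epsilon = 0$. For irreducibility within $D^0$, every action profile $\mathbf{a} \in [M]^K$ has positive probability under the uniform draws, and the update overwrites every baseline with the corresponding $(a_j,u_j(\mathbf{a}))$; hence any state in $D^0$ is reached from any other in a single step.

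The main work, and the main obstacle, is to show that every $z \notin C^0 \cup D^0$ is transient. If all players are content but the baselines are not aligned, then some $j$ has $u_j(\bar{\mathbf{a}}) \neq \bar u_j$, and that player turns discontent after one step. Otherwise $z$ already contains a discontent player, and one must rule out the possibility that a content player $j$ stays content forever because the discontent players happen always to reproduce the baseline count on arm $\bar a_j$. This is where the structure of our utilities replaces the interdependence hypothesis used in \cite{marden}: the separability condition makes $\hat\mu_j(m,\cdot)$ injective in its occupancy argument, and at $\epsilon = 0$ Lemma 1 gives $u_j = \hat\mu_j(a_j, k(a_j))$ deterministically. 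Consequently a single discontent player whose uniform choice changes the count on $\bar a_j$ forces $u_j \neq \bar u_j$; this event has probability bounded below away from zero at each round, so combined with the monotonicity of the discontent count the chain reaches $D^0$ in finitely many steps almost surely, proving transience. Putting these pieces together identifies the recurrence classes as exactly $D^0$ together with the singletons $\{z\} \subset C^0$, matching the classification in \cite{marden}.
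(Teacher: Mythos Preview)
Your argument is correct and follows essentially the same route as the paper: you verify that $D^0$ and the singletons in $C^0$ are closed, then show that every other state drifts into $D^0$ because a discontent player's uniform arm choice can, with probability bounded away from zero, alter the occupancy on a content player's baseline arm and break that player's utility alignment, with the monotone growth of the discontent set then forcing absorption in $D^0$. Your treatment is slightly more explicit than the paper's about the irreducibility of $D^0$ and about invoking injectivity of $\hat\mu_j(m,\cdot)$ in the occupancy argument (the paper simply asserts the content player's utility becomes misaligned), but these are refinements of the same mechanism rather than a different approach; one small quibble is that your appeal to Lemma~1 at $\epsilon=0$ is really an appeal to the definition of the unperturbed process (where $p_\epsilon=0$ by construction), not to the Hoeffding bound itself.
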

The proof is provided in Appendix \ref{sec:app_B}.

Let $D$ be any state in $D^0$ and $z,z' \in C^0$. It can be seen that the resistances for the paths $z \to D$, $D \to z$ and $z \to z'$ in our algorithm are the same as those in \cite[Section 4]{marden}. For instance, the transition from $z \to D$ occurs only when a player explores or the utility is miscalculated. Since we have $p_\epsilon \leq \epsilon^\kappa$, the probability of this event is $O(\epsilon^\kappa)$ and hence the resistance of the transition is $c$. Similarly it can be seen that the resistance for the path $D \to z$ is $(K - \sum_{j \in [K]} \Bar{u}_j)$ and $z \to z'$ is bounded in $[c,2c)$.

The rest of the proof of Theorem 2 follows from \cite[Theorem 3.2]{marden}. Thus, the stochastically stable states of our matching phase algorithm maximize the sum of the utilities of the players. Since we assume a unique optimal action profile, the state with the baseline actions and utilities corresponding to the optimal action profile and all players being content is the stochastically stable state. 

%Since the stochastically stable state is played for a majority of the time, we can bound the probability of the optimal action profile not being identified at the end of the matching phase as in the following result.
In the following lemma, we bound the probability of the optimal action profile not being played during the exploitation phase of epoch $\ell$ (identified from the matching phase of epochs $\lceil\frac{\ell}{2}\rceil$ to $\ell$), given that event $E^\ell$ does not occur (\textit{i.e.}, the exploration phases of epochs $\lceil\frac{\ell}{2}\rceil$ to $\ell$ were successful).

\begin{lemma}\label{lem4}
In some epoch $\ell$, let 
\[
\mathbf{a}^*=\mathop{\arg\max}\limits_{\mathbf{a} \in \mathcal{A}} \sum_{j=1}^K u_j(\mathbf{a})
\]
and let $\mathbf{a'} = [a'_1,...,a'_K]$ where 
\[
a'_j = \mathop{\arg\max}\limits_{m \in [M]}  \sum_{i = \lceil\frac{\ell}{2}\rceil}^{\ell} W^i{(j,m)}
\]
is the action profile played in the exploitation phase of epoch $\ell$ by player $j$. 

Assume that for all players $j \in [K]$, for all arms $m \in [M]$ and all $n \in [N]$, the estimated mean rewards obtained at the end of the exploration phase for  epochs $\lceil\frac{\ell}{2}\rceil \leq i \leq \ell$ satisfy $|\hat{\mu}_j(m,n) - \mu_j(m,n)| \leq \Delta$. Then
%Then for small enough $\epsilon$, 
\[
P\{\mathbf{a^*} \neq \mathbf{a'}\} \leq \left(C_0 \exp{(- C_\rho \ell^{\delta/2})}\right)^\ell
\]
for some $C_0, C_\rho > 0$.
\end{lemma} 

The proof relies on using Chernoff-Hoeffding bounds for Markov chains (\cite[Theorem 3]{chung}) and is provided in Appendix \ref{sec:app_B}.

\section{Simulation Results}\label{experiments}

In this section, we present some illustrative simulation results. We consider two cases, one with $K = M$ and the other with $K > M$. In both cases, we set $N = 2$. The mean rewards $\mu_j(m,1)$ for player $j$, arm $m$ are generated uniformly at random from $[0.2,0.95]$. The mean rewards $\mu_j(m,2)$ are generated as $\mu_j(m,2) = 0.5\mu_j(m,1) + u$, where $u$ is a uniform random variable in $[-0.05,0.05]$. The rewards are generated from a uniform distribution in the range [-0.18, 0.18] around the mean. We set $\delta = 0$ (see Appendix \ref{app:exp} for details). 

\begin{figure}
\begin{center}
\centerline{\includegraphics[scale = 0.42]{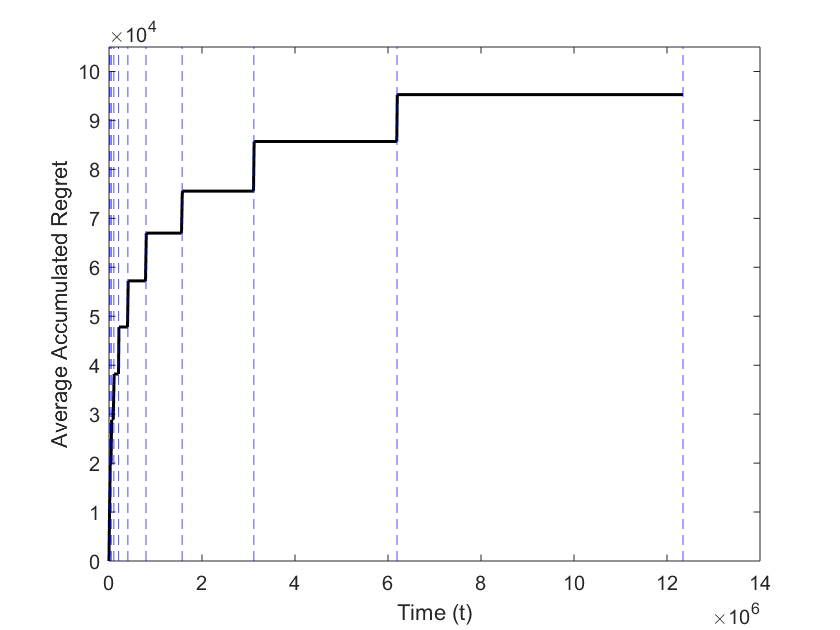}}
\caption{Average accumulated regret as a function of time}
\label{one}
\end{center}
\vskip -0.3in
\end{figure}

Due to numerical considerations, we modify the probabilities in \eqref{state} as $\epsilon^{u_j^{\mathrm{max}} - \hat{u}_j}$ and $1-\epsilon^{u_j^{\mathrm{max}} - \hat{u}_j}$ instead of $\epsilon^{1- \hat{u}_j}$ and $1-\epsilon^{1-\hat{u}_j}$, where $u_j^{\mathrm{max}}$ is the maximum utility that can be received by the player. Further explanation of this is given in Appendix \ref{app:exp}.

To the best of our knowledge, the setting we have considered that allows for heterogeneous reward distributions and non-zero rewards on collisions has not been studied prior to this work. Therefore, it is not possible to compare our algorithms against existing algorithms. A naive uniform pull algorithm, where every player uniformly chooses an arm at each time instant would give linear regret, which would be worse than our proposed algorithm.

\begin{figure}
\begin{center}
\centerline{\includegraphics[scale = 0.42]{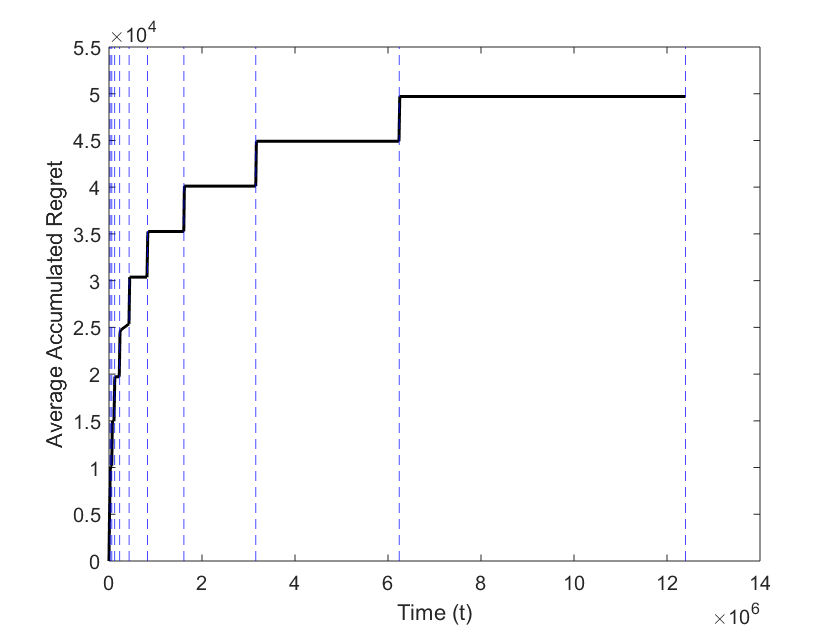}}
\caption{Average accumulated regret as a function of time}
\label{two}
\end{center}
\vskip -0.3in
\end{figure}

For the case with $K = M$, we consider a system with $K = 6$ players and $M = 6$ arms. The optimal action profile is $\mathbf{a^*} = [2\; 1 \;1 \; 6 \; 4 \; 5]$. The considered system has $\Delta = 0.05$. We set $T_0 = 800$ , $c_2 = c_3 = 6 \times 10^3$. The value of $\epsilon$ is set to $10^{-5}$. Since it is possible for each player to play a distinct arm and receive non-zero rewards, $u_j^{\mathrm{max}} = \mathop{\max}_{m \in [M]} \hat{\mu}_j(m,1) $ for all $j \in [K]$. The algorithm was run for $10$ epochs and the experiment was repeated for $100$ iterations and the accumulated regret averaged over the iterations.

For the case with $K > M$, we consider a system with $K = 6$ players and $M = 3$ arms. The optimal action profile is $\mathbf{a^*} = [1\; 2 \;3 \; 1 \; 2 \; 3]$. The considered system has $\Delta = 0.0305$. We set $T_0 = 2 \times 10^3$, $c_2 = c_3 =  10^4$. The value of $\epsilon$ is set to $10^{-5}$. Since it is required for 2 players to choose each arm for all the players to receive non-zero rewards, $u_j^{\mathrm{max}} = \mathop{\max}_{m \in [M]} \hat{\mu}_j(m,2) $. The algorithm was run for $10$ epochs and the experiment was repeated for $100$ iterations and the accumulated regret averaged over the iterations. 

From Figure \ref{one} and Figure \ref{two}, we see that the average accumulated regret grows sub-linearly with time. We can also observe that the average regret  incurred during the exploitation phase of each epoch is small as the matching phase converges to the optimal action profile with high probability.
\section{Conclusion}
%
%The number of devices utilizing the wireless spectrum is increasing rapidly. The  existing  spectrum  management  paradigm  treats  frequency spectrum  as  a  fixed  commodity,  which leads  to  spectrum under-utilization. Dynamic spectrum access is a method in which spectrum is allocated in real time to maximize spectrum utilization. 
The multi-player multi-armed bandit approach has been used extensively in recent literature to model uncoordinated dynamic spectrum access systems. The aim of this work was to pose the more realistic and challenging problem setup of multi-player multi-armed bandits with heterogeneous reward distributions and non-zero rewards on collisions, and provide a completely decentralized algorithm achieving near order-optimal regret. We considered a setting where the players cannot communicate with each other and can observe only their own actions and rewards. While settings with non-zero rewards on collisions and heterogeneous reward distributions of arms across players have been considered separately in prior work, a model allowing for both has not been studied previously to the best of our knowledge. With this setup, we presented a policy that achieves near order optimal expected regret of order $O(\log^{1 + \delta} T)$ for some $0 < \delta < 1$ over a time-horizon of duration $T$. Our results have applications to non-orthogonal multiple access systems (NOMA) (see e.g. \cite{marie} for some recent results in this area). A common assumption in most works studying the multi-player multi-armed bandit setup, including ours, is time synchronization. A possible direction of future study in this area is to examine this problem without the assumption that players are time synchronized.

% if have a single appendix:
%\appendix[Proof of the Zonklar Equations]
% or
%\appendix  % for no appendix heading
% do not use \section anymore after \appendix, only \section*
% is possibly needed

% use appendices with more than one appendix
% then use \section to start each appendix
% you must declare a \section before using any
% \subsection or using \label (\appendices by itself
% starts a section numbered zero.)
%

\appendices
\section{Exploration Phase: Proof of Lemma \ref{lem1}}\label{app:exploration}

We want to bound the probability of the event $E_{j,m,n}^\ell$ that after the exploration phase of epoch $\ell$, there exists at least epoch $\lceil \frac{\ell}{2} \rceil \leq i \leq \ell $ such that the estimated mean reward after the exploration phase of epoch $i$ $\hat{\mu}_j(m,n)$ satisfies $|\mu_j(m,n) - \hat{\mu}_j(m,n)| \geq \Delta$.

For a fixed player $j$, arm $m$ and number of players on the arm $n \leq N$, the number of total number of reward samples obtained after the exploration phase of epoch $i$ from the corresponding reward distribution with mean reward $\mu_j(m,n)$ is $T_0 \sum_{j = 1}^i j^\delta \geq T_0 (\frac{i}{2})^{1 + \delta}$. After the exploration phase of epoch $i$, the probability that the estimated mean reward $\hat{\mu}_j(m,n)$ deviates from $\mu_j(m,n)$ by more than $\Delta$ is:

\begin{align*}
    &P\left\{\text{After epoch $i$ :} \; |\hat{\mu}_j(m,n) - \mu_j(m,n)| \geq \Delta \right\} \\
    &\leq e^{-2T_0\Delta^2 \sum_{j = 1}^i j^\delta} \\
    &\leq e^{-2T_0\Delta^2 (\frac{i}{2})^{1 + \delta}}
\end{align*}

It follows that
\begin{equation}
    \begin{split}
        &P\left(E_{j,m,n}^\ell \right) \\
    &= P \left(\bigcup_{i = \lceil \frac{\ell}{2} \rceil}^{\ell} \left\{\text{After epoch $i$ :} \; |\hat{\mu}_j(m,n) - \mu_j(m,n)| \geq \Delta \right\} \right) \\
    &\leq \sum_{i = \lceil \frac{\ell}{2} \rceil}^{\ell} P\left\{\text{After epoch $i$ :} \; |\hat{\mu}_j(m,n) - \mu_j(m,n)| \geq \Delta \right\} \\
    &\leq \sum_{i = \lceil \frac{\ell}{2} \rceil}^{\ell} e^{-2T_0\Delta^2 (\frac{i}{2})^{1 + \delta}} \\
    &\leq \sum_{i = \lceil \frac{\ell}{2} \rceil}^{\ell} e^{-T_0\Delta^2 (\frac{\ell}{4})^{\delta}i} \leq \frac{e^{- \frac{T_0 \Delta^2}{2} (\frac{\ell}{4})^{\delta} \ell}}{1 - e^{-T_0 \Delta^2 (\frac{\ell}{4})^{\delta}}}.
    \end{split}
\end{equation}

% you can choose not to have a title for an appendix
% if you want by leaving the argument blank
\section{Matching Phase} \label{sec:app_B}

By definition of the stochastically stable state, this state is played for a majority of time eventually. The matching phase algorithm presented in our paper differs from the dynamics presented in \cite{marden} in two aspects. Despite these two differences, we show that the proof technique  of \cite[Theorem 3.2]{marden} can be adapted to our dynamics as well.

\subsection{Proof of Lemma~\ref{recurring}}
 In the unperturbed process, if all the players are discontent, they remain discontent with probability $1$. Thus we have that $D^0$ represents a single recurrence class. In each state $z \in C^0$, each player chooses their baseline action, and since the utilities received would be the same as the baseline utilities, each player stays content with probability $1$. Thus we have that $D^0$ and the singletons $z \in C^0$ are recurrence classes. 
 
To see that the above are the only recurrence classes, look at any state that has atleast one discontent player and atleast one content player. We have that the baseline actions and utilities of the content players are aligned (since this is the unperturbed process). Consider one of the discontent players. This player chooses an action at random and there is a positive probability (bounded away from 0) of choosing the action of a content player. This would cause the utility of the content player to become misaligned with his baseline utility, thus leading to that player becoming discontent. This continues until all players become discontent. Thus any such state cannot be a recurrent state. 

Now consider a state where all agents are content, but there is at least one player $j$ whose benchmark action and utility are not aligned. For the unperturbed process, in the following step, the same action profile would be played but this would cause player $j$ to become discontent and it follows from the previous argument that this leads to all players becoming discontent. 

Thus we have that $D^0$ and all singletons in $C^0$ are the only recurrent states of the unperturbed process.

\subsection{Proof of Lemma \ref{lem4}}

Since it is given that the exploration phases for all epochs $\lceil \frac{\ell}{2} \rceil$ to $\ell$ are successful, the efficient action profiles maximizing the sum of utilities in the matching phase for all epochs $\lceil \frac{\ell}{2} \rceil \leq i \leq \ell$ are the same, and are equal to the optimal action profile maximizing the sum of expected mean rewards:

$$\mathop{\arg\max}_{\mathbf{a} \in \mathcal{A}} \sum_{i=1}^{K} \mu_j(a_j,k(a_j)) = \mathop{\arg\max}_{\mathbf{a} \in \mathcal{A}} \sum_{i=1}^{K} u_j(\mathbf{a}).$$

Let $\mathbf{\Bar{u}^\ell}$ denote the utilities of the players for the optimal action profile $\mathbf{a^*}$ during epoch $\ell$. The optimal state of the Markov chain is then ${z^*}^\ell = [\mathbf{a^*},\mathbf{\Bar{u}^\ell},C^K ]$ during epoch $\ell$. Note that optimal state differs only in the baseline utilities for epochs $\lceil\frac{\ell}{2}\rceil$ to $\ell$.  In order to bound the probability of the event $\{\mathbf{a^*} \neq \mathbf{a'}\}$, we use the Chernoff Hoeffding bounds for Markov chains from \cite{chung}, which is also used in \cite{got} and \cite{bistritz2020game}. When the Markov chain is in state $z$, the estimated/observed state is $\hat{z}$, corresponding to the estimated states of all the players. The function $f(z)$ considered here in order to use the bound from \cite[Theorem 3]{chung} for epoch $\ell$ is:

\begin{equation}
    f(z) = \identityf{\hat{z} = {z^*}^{\ell}}, 
\end{equation}

\textit{i.e.} the estimated state is the optimal state. Recall that $\tau_i = \sqrt{c_2} i^{\delta/2}$ (replacing $\ell$ by $i$ in $\tau_\ell$). It follows that 

\begin{align}
    P\left\{\mathbf{a'} \neq \mathbf{a^*}\right\} \leq P\left\{\sum_{i = \lceil \frac{\ell}{2}\rceil}^\ell \sum_{h = 1}^{\tau_i} f(z_h) \leq \frac{1}{2}\sum_{i = \lceil \frac{\ell}{2}\rceil}^\ell \tau_i \right\}.
\end{align}

Define 

\begin{equation}
    X_i = \sum_{h = 1}^{\tau_i} \identityf{\hat{z}_h = {z^*}^{i}}
\end{equation}

\begin{equation}
    L = \frac{1}{2}\sum_{i = \lceil \frac{\ell}{2}\rceil}^\ell \tau_i.
\end{equation}

Using the Chernoff bound, for some $s > 0$, it follows that

\begin{align}
    P\left\{\sum_{i = \lceil \frac{\ell}{2}\rceil}^\ell X_i \leq \frac{1}{2}\sum_{i = \lceil \frac{\ell}{2}\rceil}^\ell \tau_i \right\} &= P\left\{e^{-s\sum_{i = \lceil \frac{\ell}{2}\rceil}^\ell X_i} \geq e^{-sL} \right\} \\
    &\leq e^{sL}\Pi_{i = \lceil \frac{\ell}{2}\rceil}^\ell \expect{e^{-sX_i}} \label{chernoff}.
\end{align}

In order to use the bound from \cite[Theorem 3]{chung}, we need to calculate $\mu^\ell = \expect{f(z)} = P(\hat{z} = {z^*}^\ell)$. Observe that

\begin{align}
    \expect{f(z)} &= P\{\hat{z} = {z^*}^\ell\} \\
    &\geq P\{{z} = {z^*}^\ell , \hat{z} = z\} \\
    &\geq P\{{z} = {z^*}^\ell\}(1 - K\epsilon^\kappa)
\end{align}

where the last step follows from Lemma \ref{lem2}. 

Define 

\begin{align}
    \pi_z &= \min_{\lceil \frac{\ell}{2}\rceil \leq i \leq \ell} P\{{z} = {z^*}^i\} \\
    \mu &= \min_{\lceil \frac{\ell}{2}\rceil \leq i \leq \ell} \mu^i.
\end{align}

From the definition of a stochastically stable state, we can choose an $\epsilon$ small enough such that 

\begin{equation}
    \mu \geq \pi_z (1 - K \epsilon^\kappa) > \frac{1}{2(1-\eta)}
\end{equation}

for some $0 < \eta < 1/2$. 

We can now use the bound from \cite[Theorem 3]{chung} for epoch $\lceil \frac{\ell}{2}\rceil \leq i \leq \ell$ to get

\begin{align}
    P\left\{X_i \leq  \frac{\tau_i}{2} \right\} &\leq P\left\{\sum_{h = 1}^{\tau_i} \identityf{\hat{z}_h = {z^*}^{i}} \leq (1 - \eta)\mu^i \tau_i \right\} \\
    &\leq c_0 \|\phi_i\|_\pi \exp\left(- \frac{\eta^2 \mu^i \tau_i}{72T}\right)\\
    &\leq c_0 \|\phi_i\|_\pi \exp\left(- \frac{\eta^2 \mu \tau_i}{72T}\right)
\end{align}

where $c_0 > 0$, $\phi_i$ is the initial distribution of the Markov chain in the $i$-th epoch and $T$ is the 1/8-th mixing time of the Markov chain. 
Using $s = \frac{\eta^2}{(1 - \eta)72T}$ it follows that,  

\begin{align}
    \expect{e^{-sX_i}} \leq (1 + c_0 \|\phi_i\|_\pi)  \exp\left(- \frac{\eta^2 \mu \tau_i}{72T} \right).
\end{align}

Using the above in \eqref{chernoff}, 

\begin{align}
     &P\left\{\sum_{i = \lceil \frac{\ell}{2}\rceil}^\ell X_i \leq \frac{1}{2}\sum_{i = \lceil \frac{\ell}{2}\rceil}^\ell \tau_i \right\} \\
     &\leq \Pi_{i = \lceil \frac{\ell}{2}\rceil}^\ell (1 + c_0 \|\phi_i\|_\pi) e^{\frac{\eta^2L}{(1 - \eta)72T}} e^{- \frac{\eta^2 \mu \tau_i}{72T}} \\
     &\leq C_0^\ell \exp \left({-\frac{\eta^2(\mu - \frac{1}{2(1 - \eta)})2L}{72T}}\right) \\ 
     &\leq C_0^\ell \exp \left({-\frac{{\eta^2 \sqrt{c_2}} {2^{-(1 + \delta/2)}}(\mu - \frac{1}{2(1 - \eta)}) \ell^{1 + \delta/2}}{72T}}\right) \\
     &\leq \left(C_0 \exp \left( -C_\rho \ell^{\delta/2} \right)\right)^\ell
\end{align}

where $C_0 = \max_{\lceil \frac{\ell}{2}\rceil \leq i \leq \ell} (1 + c_0 \|\phi_i\|_\pi)$, and 
\begin{equation}\label{crho}
 C_\rho = \frac{{\eta^2 \sqrt{c_2}} {2^{-(1 + \delta/2)}}(\mu - \frac{1}{2(1 - \eta)}) }{72T} > 0.   
\end{equation}

\section{Details on Simulations}\label{app:exp}

Due to numerical considerations, we modify the state update step in the matching phase of the algorithm, which is defined in \eqref{state}.

We modify the probabilities in \eqref{state} as $\epsilon^{u_j^{\mathrm{max}} - \hat{u}_j}$ and $1-\epsilon^{u_j^{\mathrm{max}} - \hat{u}_j}$, instead of $\epsilon^{1- \hat{u}_j}$ and $1-\epsilon^{1-\hat{u}_j}$ respectively. Here $u_j^{\mathrm{max}}$ denotes the maximum utility that can be received by player $j$, without reducing the utility of any other player to zero. For example, when there are $K = 6$ players and $M = 6$ arms, each player can occupy a separate arm and all the players could receive non-zero utilities. Thus the value of $u_j^{\mathrm{max}}$ of player $j$ would be $u_j^{\mathrm{max}} = \mathop{\max}_{m \in [M]} \hat{\mu}_j(m,1)$. Consider the case when there are $K = 6$ players, $M = 3$ arms and $N = 2$. In this case, it is not possible for any player $j$ to occupy an arm by himself without reducing the utilities of some other players to zero. In this case, $u_j^{\mathrm{max}} = \mathop{\max}_{m \in [M]} \hat{\mu}_j(m,2)$. 

The algorithm presented here is motivated for small systems. As the system size increases, $\Delta$ becomes smaller and hence it becomes difficult to implement the proposed algorithm. Larger systems could be accommodated by dividing into subsystems and applying the protocol separately on each subsystem.  

Note that $\delta > 0$ was required in the proof of Theorem 1 to bound the regret incurred during the exploration and matching phases. However, in practice we observe that setting $T_0$ and $c_2$ large enough guarantees that equations \eqref{exp1} and \eqref{match1} are satisfied even with $\delta = 0$.

The order of magnitude of $\epsilon$ is important for the performance of the algorithm. However, the exact value of $\epsilon$ does not play much of a role. For example, in our simulations, we use $\epsilon = 10^{-5}$, and similar results are achieved for $\epsilon = 10^{-4}$ as well. The value of $\epsilon$ needs to be chosen small enough such that the state corresponding to the optimal action is visited a certain number of times during the matching phase. Empirically, we observe that as the value of $\Delta$ decreases, we need a smaller value of $\epsilon$ for convergence. We find that the value of $\epsilon$ depends on the optimality gap $J_1 - J_2$ and the size of the system. While \cite{got} provides a upper bound on the value of $\epsilon$, this bound depends on the analysis of the resistance tree structure of the perturbed Markov chain and needs to be computed beforehand for each setting. Note that such an analysis involving the resistance tree structures of our algorithm would be tedious and the resulting bound would be expensive to calculate. Since we only need $\epsilon$ to be of the right order of magnitude, we could tune this parameter using simulations.

\section*{Acknowledgment}
The authors of this paper would like to thank Aditya Deshmukh for valuable discussions.

%The authors would like to thank...

% Can use something like this to put references on a page
% by themselves when using endfloat and the captionsoff option.
\ifCLASSOPTIONcaptionsoff
  \newpage
\fi

% trigger a \newpage just before the given reference
% number - used to balance the columns on the last page
% adjust value as needed - may need to be readjusted if
% the document is modified later
%\IEEEtriggeratref{8}
% The "triggered" command can be changed if desired:
%\IEEEtriggercmd{\enlargethispage{-5in}}

% references section

% can use a bibliography generated by BibTeX as a .bbl file
% BibTeX documentation can be easily obtained at:
% http://mirror.ctan.org/biblio/bibtex/contrib/doc/
% The IEEEtran BibTeX style support page is at:
% http://www.michaelshell.org/tex/ieeetran/bibtex/
\bibliographystyle{IEEEtran}
% argument is your BibTeX string definitions and bibliography database(s)
\bibliography{ref.bib}
%
% <OR> manually copy in the resultant .bbl file
% set second argument of \begin to the number of references
% (used to reserve space for the reference number labels box)

% biography section
% 
% If you have an EPS/PDF photo (graphicx package needed) extra braces are
% needed around the contents of the optional argument to biography to prevent
% the LaTeX parser from getting confused when it sees the complicated
% \includegraphics command within an optional argument. (You could create
% your own custom macro containing the \includegraphics command to make things
% simpler here.)
%\begin{IEEEbiography}[{\includegraphics[width=1in,height=1.25in,clip,keepaspectratio]{mshell}}]{Michael Shell}
% or if you just want to reserve a space for a photo:

%\begin{IEEEbiography}{Michael Shell}
%Biography text here.
%\end{IEEEbiography}

% if you will not have a photo at all:
%\begin{IEEEbiographynophoto}{John Doe}
%Biography text here.
%\end{IEEEbiographynophoto}

% insert where needed to balance the two columns on the last page with
% biographies
%\newpage

%\begin{IEEEbiographynophoto}{Jane Doe}
%Biography text here.
%\end{IEEEbiographynophoto}

% You can push biographies down or up by placing
% a \vfill before or after them. The appropriate
% use of \vfill depends on what kind of text is
% on the last page and whether or not the columns
% are being equalized.

% Can be used to pull up biographies so that the bottom of the last one
% is flush with the other column.
%\enlargethispage{-5in}
\begin{IEEEbiographynophoto}{Akshayaa Magesh}(S '20)
 received her B.~Tech. degree in Electrical Engineering from the Indian Institute of Technology Madras, Chennai, India in 2018, and her Masters degree from the Department of Electrical and Computer Engineering at the University of Illinois at Urbana-Champaign in 2020. She is now a Ph.D. candidate at the Department of Electrical and Computer Engineering at the University of Illinois at Urbana-Champaign. Her research interests span the areas of statistical inference, theoretical and algorithmic aspects of robust machine learning and reinforcement learning.
 \end{IEEEbiographynophoto}

\begin{IEEEbiographynophoto}{Venugopal V. Veeravalli (M'92, SM'98, F'06)}  received the B.Tech. degree (Silver Medal Honors) from the Indian Institute of Technology, Bombay, in 1985, the M.S. degree from Carnegie Mellon University, Pittsburgh, PA, in 1987, and the Ph.D. degree from the University of Illinois at Urbana-Champaign, in 1992, all in electrical engineering. He joined the University of Illinois at Urbana-Champaign in 2000, where he is currently the Henry Magnuski Professor in the Department of Electrical and Computer Engineering, and where he is also affiliated with the Department of Statistics, and the Coordinated Science Laboratory. Prior to joining the University of Illinois, he was on faculty of the ECE Department at Cornell University. He served as a Program Director for communications research at the U.S. National Science Foundation from 2003 to 2005. His research interests span the theoretical areas of statistical inference, machine learning, and information theory, with applications to data science, wireless communications, and sensor networks. He was a Distinguished Lecturer for the IEEE Signal Processing Society during 2010--2011. He has been on the Board of Governors of the IEEE Information Theory Society. He has been an Associate Editor for Detection and Estimation for the IEEE Transactions  on Information Theory and for the IEEE Transactions on Wireless Communications. Among the awards he has received for research and teaching are the IEEE Browder J. Thompson Best Paper Award, the Presidential Early Career Award for Scientists and Engineers (PECASE), and the Wald Prize in Sequential Analysis.
\end{IEEEbiographynophoto}

\vfill
% that's all folks
\end{document}